\pdfoutput=1
\documentclass{article}


\usepackage[preprint]{cpal_2026}

\usepackage{amsmath,amssymb,amsthm}
\usepackage{url}
\usepackage{microtype}
\usepackage[T1]{fontenc}
\usepackage{lmodern}
\usepackage{graphicx}
\usepackage{booktabs}
\usepackage[numbers]{natbib}

\newtheorem{proposition}{Proposition}
\newtheorem{remark}{Remark}

\title{Evidence Slopes and Effective Dimension in Singular Linear Models}

\author{%
  Kalyaan Rao \\
  New York University, Center for Data Science \\
  \texttt{kmr9800@nyu.edu}
}

\begin{document}

\maketitle

\begin{abstract}
Bayesian model selection commonly relies on Laplace's approximation or BIC, which assume that the effective model dimension equals the number of parameters \citep{schwarz1978estimating,kass1995bayes}. Singular learning theory replaces this by the real log canonical threshold (RLCT), an effective dimension that can be strictly smaller in overparameterised or low-rank models \citep{watanabe2009algebraic,watanabe2018mathematical}. We study simple linear--Gaussian rank models where RLCT is analytically tractable and the exact marginal likelihood is available in closed form. In this setting, we show theoretically and empirically that the error of Laplace/BIC grows linearly with $(d/2-\lambda)\log n$, while an RLCT-aware correction recovers the correct slope and is invariant to overcomplete reparametrisations with the same data subspace. Our results provide a concrete, finite-sample characterisation of ``Laplace failure'' in singular models and a simple slope-based estimator of RLCT in textbook linear settings.
\end{abstract}

\section{Introduction}

Bayesian evidence (marginal likelihood) underlies many model selection procedures, and Laplace or BIC are the standard approximations in practice \citep{tierney1986accurate,schwarz1978estimating,kass1995bayes}.
These approximations implicitly assume regular models where the Fisher information is full rank and the effective dimension equals the parameter count $d$.

Many overparameterised models (low-rank regression, dictionary learning, neural networks) are singular: parameters are redundant, likelihoods have flat directions, and the usual $d/2$ penalty is incorrect.
Singular learning theory (SLT) shows that in such models the marginal likelihood is instead governed by the real log canonical threshold (RLCT) $\lambda$, with
\[
\log p(D_n) = \log p(D_n \mid \theta^\star) - \lambda \log n + O(\log\log n),
\]
where $\theta^\star$ is any parameter minimising the Kullback--Leibler divergence to the true data-generating distribution \citep{watanabe2009algebraic,watanabe2018mathematical}.
Using $d/2$ in place of $\lambda$ therefore mis-specifies the complexity term in $\log p(D_n)$ and can bias model comparison.

We revisit this gap in a deliberately simple setting where both RLCT and the exact evidence are tractable: linear--Gaussian rank models and linear subspace (dictionary) models with Gaussian priors, closely related to probabilistic PCA and factor analysis \citep{tipping1999probabilistic,ghahramani1997factor}.

Concretely, our contributions are:
\begin{itemize}
\item For rank-deficient linear--Gaussian regression, we derive the exact marginal likelihood in closed form and show that the RLCT is $\lambda(r)=r/2$, where $r$ is the intrinsic rank.
\item We prove that, in these models, the Laplace/BIC approximation incurs an error whose leading term is $\bigl(d/2-\lambda(r)\bigr)\log n = (d-r)\log n/2$, and we show empirically that a simple RLCT-aware correction removes this bias.
\item In a linear dictionary (subspace) model, we show that RLCT and evidence are invariant under overcomplete reparametrisations with the same span, while BIC is not; experiments with eigenspectra and exact evidences illustrate this representation non-invariance.
\item We interpret the slope of the evidence versus $\log n$ as a practical, finite-sample estimator of RLCT in these textbook linear settings, providing a simple way to ``see'' effective dimension from data.
\end{itemize}

\section{Related Work}

\paragraph{Laplace approximation and BIC.}
Laplace's method and the Bayesian Information Criterion (BIC) are standard tools for approximating marginal likelihoods in regular parametric models \citep{tierney1986accurate,schwarz1978estimating,kass1995bayes}.
Under smoothness and identifiability assumptions, the posterior concentrates at rate $n^{-1/2}$ around a unique maximum likelihood or MAP estimate $\hat\theta_n$, and the log evidence admits the expansion
\begin{equation}
  \log p(D_n)
  = \log p(D_n \mid \hat\theta_n) - \frac{d}{2}\log n + O(1),
\end{equation}
where $d$ is the parameter dimension.
BIC keeps only the data-fit term and the $d/2 \log n$ penalty and is widely used for model selection in regression, time series, and latent variable models.

\paragraph{Singular learning theory.}
Many modern models violate the regularity assumptions behind Laplace/BIC:
mixture models, neural networks, low-rank and overcomplete representations all exhibit redundant parameters and flat directions in the likelihood.
Singular learning theory replaces the dimension $d$ by the real log canonical threshold (RLCT) $\lambda$, a birational invariant that quantifies the local singularity of the likelihood at Kullback--Leibler minimisers \citep{watanabe2009algebraic,watanabe2018mathematical}.
In such models the evidence behaves as
\begin{equation}
  \log p(D_n)
  = \log p(D_n \mid \theta^\star) - \lambda \log n + (m-1)\log\log n + O(1),
\end{equation}
where $\theta^\star$ is any KL minimiser, $\lambda>0$ is the RLCT, and $m \in \mathbb{N}$ is its multiplicity.
For regular models $\lambda=d/2$ and $m=1$, recovering the classical Laplace/BIC expansion; in singular models typically $\lambda<d/2$.
Analytic RLCTs have been derived for certain mixture and rank-constrained models \citep{watanabe2001algebraic,watanabe2009algebraic,aoyagi2010asymptotic}, but closed-form results are rare beyond simple families.

\paragraph{Empirical flatness and effective dimension.}
Several works propose empirical surrogates for model ``flatness'' or effective dimension, based on the Hessian spectrum, Fisher information, or PAC-Bayesian bounds \citep{hochreiter1997flat,keskar2017large,liang2019fisher,dziugaite2017computing}.
These proxies often correlate with generalisation in overparameterised networks but are heuristic and not directly grounded in the asymptotic marginal likelihood.
Closer to SLT, some recent work attempts to estimate RLCT-like exponents from training curves or posterior samples, typically without access to ground-truth RLCT or exact evidences \citep{matsuda2019information,rissanen2010mdl}.
Our work differs in that we stay within a linear--Gaussian family where RLCT and the exact marginal likelihood are analytically tractable, and we compare these quantities directly to Laplace/BIC.

\paragraph{Low-rank and overparameterised models.}
Low-rank regression, factor analysis, and overcomplete dictionaries are prototypical examples of models that use a low-dimensional subspace but potentially many parameters to represent it \citep{ghahramani1997factor,tipping1999probabilistic,olshausen1997sparse}.
Model selection in these families is often carried out using BIC-type penalties on the raw number of parameters.
From an SLT perspective, however, the effective complexity should depend on the intrinsic rank (the dimension of the subspace) rather than on the number of coordinates used to represent it.
The linear--Gaussian models we study provide a setting where this distinction can be made explicit and quantified in terms of marginal likelihood.

\section{Background and Problem Setup}

\subsection{Bayesian evidence and Laplace approximation}

Let $D_n = \{(x_i,y_i)\}_{i=1}^n$ be data, $p(y\mid x,\theta)$ a likelihood, and $\pi(\theta)$ a prior on parameters $\theta \in \mathbb{R}^d$.
The marginal likelihood (evidence) of the model is
\begin{equation}
  Z_n := p(D_n)
  = \int p(D_n \mid \theta)\,\pi(\theta)\,d\theta
  = \int \prod_{i=1}^n p(y_i \mid x_i,\theta)\,\pi(\theta)\,d\theta.
\end{equation}
Bayesian model comparison typically prefers models with larger $Z_n$.

In a regular model, the posterior concentrates in a neighbourhood of a unique maximiser $\hat\theta_n$ of the log posterior as $n\to\infty$.
A second-order Taylor expansion around $\hat\theta_n$ yields Laplace's approximation \citep{tierney1986accurate}
\begin{equation}
  \log Z_n
  \approx \log p(D_n \mid \hat\theta_n) + \log \pi(\hat\theta_n)
  + \frac{d}{2}\log(2\pi) - \frac{1}{2}\log\det H_n,
\end{equation}
where $H_n$ is the Hessian of the negative log posterior at $\hat\theta_n$.
Under usual regularity conditions, $H_n = n I(\theta^\star) + o_p(n)$, where $I(\theta^\star)$ is the Fisher information at a KL minimiser $\theta^\star$, and $\det H_n \asymp n^d$, giving
\begin{equation}
  \log Z_n
  = \log p(D_n \mid \hat\theta_n) - \frac{d}{2}\log n + O_p(1).
\end{equation}
BIC discards $\log \pi(\hat\theta_n)$ and constant terms and keeps exactly this $d/2 \log n$ penalty \citep{schwarz1978estimating}.

\subsection{Singular learning theory and RLCT}

In singular models, multiple parameter values can represent the same function, the Fisher information can be rank-deficient, and the posterior cannot be well-approximated by a single Gaussian.
Singular learning theory characterises the evidence in this setting via the real log canonical threshold (RLCT) $\lambda$ and its multiplicity $m$ \citep{watanabe2009algebraic,watanabe2018mathematical}.
Let $\theta^\star$ denote any parameter that minimises the Kullback--Leibler divergence from the true data-generating distribution $q(x,y)$ to the model.
Under mild conditions, one has the asymptotic expansion
\begin{equation}
  \log Z_n
  = \log p(D_n \mid \theta^\star) - \lambda \log n + (m-1)\log\log n + O(1).
\end{equation}
Here $\lambda>0$ is a rational number depending on the local algebraic structure of the likelihood around the set of KL minimisers, and $m\in\mathbb{N}$ is its multiplicity.
In a regular $d$-dimensional model one recovers $\lambda = d/2$ and $m=1$, and the expansion reduces to the Laplace/BIC formula.
In a singular model typically $\lambda<d/2$, so replacing $\lambda$ by $d/2$ over-penalises the model by $(d/2-\lambda)\log n$ in the log evidence.

In this work we do not use the full machinery of algebraic geometry.
Instead, we exploit the fact that in linear--Gaussian models the RLCT and the exact evidence can be obtained using elementary linear algebra, and we compare these quantities directly to their Laplace/BIC counterparts.

\subsection{Linear--Gaussian rank models and dictionaries}

We now introduce the concrete models we study and fix notation.

\paragraph{Rank-$r$ linear regression.}
Let $x_i \in \mathbb{R}^p$, $y_i \in \mathbb{R}$, and consider the model
\begin{equation}
  y_i = x_i^\top B\theta + \varepsilon_i,\qquad
  \varepsilon_i \sim \mathcal{N}(0,\sigma^2),
\end{equation}
where $B \in \mathbb{R}^{p\times d}$, $\theta \in \mathbb{R}^d$, and $\sigma^2>0$ is known.
We write $X_n \in \mathbb{R}^{n\times p}$ for the design matrix with rows $x_i^\top$, and $A_n := X_n B \in \mathbb{R}^{n\times d}$ for the effective design.
Given $\theta$, the likelihood is
\begin{equation}
  p(D_n \mid \theta)
  = \mathcal{N}\bigl(y; A_n \theta,\ \sigma^2 I_n\bigr),
\end{equation}
with $y = (y_1,\dots,y_n)^\top$.
We place a Gaussian prior $\theta \sim \mathcal{N}(0,\tau^2 I_d)$ with $\tau^2>0$.
If $\operatorname{rank}(B)=d$ and $X_n$ has full column rank, this is a regular $d$-dimensional regression model.
If $\operatorname{rank}(B)=r<d$, then $A_n$ has rank at most $r$ and there are $d-r$ directions in $\theta$ that do not affect the likelihood; the model is singular in the ambient coordinates.

\paragraph{Linear subspace (dictionary) model.}
We also consider a simple linear--Gaussian latent factor model, phrased in the language of ``dictionaries'' used in representation learning \citep{ghahramani1997factor,tipping1999probabilistic,olshausen1997sparse}.
Let $y_i \in \mathbb{R}^p$ be observations and let $D \in \mathbb{R}^{p\times d}$ be a matrix whose columns $d_1,\dots,d_d$ span a subspace $V \subseteq \mathbb{R}^p$.
We generate
\begin{equation}
  z_i \sim \mathcal{N}(0,\tau^2 I_d),\qquad
  y_i = D z_i + \varepsilon_i,\qquad
  \varepsilon_i \sim \mathcal{N}(0,\sigma^2 I_p).
\end{equation}
Thus each $y_i$ lies near the linear subspace $V = \mathrm{span}\{d_1,\dots,d_d\}$.
We emphasise that we do not impose sparsity on $z_i$; this is the same linear--Gaussian model as probabilistic PCA or factor analysis, with $D$ playing the role of a ``dictionary'' for the subspace.

The intrinsic dimension of the representation is $r := \dim V \le d$.
If $r<d$, the dictionary is overcomplete: there are more columns than the subspace dimension, and many different coefficient vectors $z_i$ lead to the same $Dz_i$.
We are interested in comparing ``minimal'' and ``overcomplete'' parametrisations that share the same span $V$ (and hence the same family of distributions on $y_i$), and in understanding how their evidences behave under BIC versus an RLCT-aware view.

\paragraph{Exact and approximate evidences.}
In both settings, the Gaussian prior and likelihood imply that the marginal likelihood $Z_n$ has a closed form, obtainable by integrating out $\theta$ (or the $z_i$) analytically.
We treat this closed-form value as the exact evidence.
We compare it to:
(i) a naive Laplace/BIC approximation, which uses the ambient parameter dimension $d$, and
(ii) an RLCT-aware approximation, which replaces $d/2$ by the analytic RLCT $\lambda(r)$ derived in the next section.

\section{Evidence slopes in rank-$r$ linear regression}

In this section we derive closed-form expressions for the marginal likelihood in the rank-$r$ regression model, identify the corresponding RLCT, and relate the approximation error of Laplace/BIC to the rank deficit $d-r$.

\subsection{Closed-form marginal likelihood}

For a fixed design matrix $A_n \in \mathbb{R}^{n\times d}$ and Gaussian prior $\theta \sim \mathcal{N}(0,\tau^2 I_d)$, the linear--Gaussian model admits a closed-form marginal likelihood.

\begin{proposition}[Exact evidence in linear--Gaussian regression]\label{prop:exact-evidence}
Let $y \in \mathbb{R}^n$ and $A_n\in\mathbb{R}^{n\times d}$ be fixed, and consider
\[
y \mid \theta \sim \mathcal{N}(A_n\theta,\sigma^2 I_n), \qquad
\theta \sim \mathcal{N}(0,\tau^2 I_d).
\]
Then the marginal likelihood $Z_n = p(y \mid A_n)$ is
\[
\log Z_n
= -\frac{1}{2}\Bigl(
n \log(2\pi)
+ n \log \sigma^2
+ \log\det(I_d + \alpha S_n)
+ \sigma^{-2}\bigl( y^\top y - \alpha\, y^\top A_n(I_d + \alpha S_n)^{-1} A_n^\top y \bigr)
\Bigr),
\]
where $S_n := A_n^\top A_n$ and $\alpha := \tau^2/\sigma^2$.
\end{proposition}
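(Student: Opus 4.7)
The plan is to recognise that $y$ is marginally Gaussian and then simply rewrite the resulting $n\times n$ Gaussian density into the stated $d\times d$ form via standard matrix identities. Concretely, because $y = A_n\theta + \varepsilon$ with $\theta\sim\mathcal{N}(0,\tau^2 I_d)$ and $\varepsilon\sim\mathcal{N}(0,\sigma^2 I_n)$ independent, the marginal distribution of $y$ is
\[
y \sim \mathcal{N}\bigl(0,\ \Sigma_y\bigr), \qquad \Sigma_y := \sigma^2 I_n + \tau^2 A_n A_n^\top,
\]
so the evidence is immediately
\[
\log Z_n = -\tfrac{1}{2}\bigl(n\log(2\pi) + \log\det \Sigma_y + y^\top \Sigma_y^{-1} y\bigr).
\]
The remaining work is to show that $\log\det\Sigma_y$ and $y^\top\Sigma_y^{-1}y$ coincide with the two terms appearing in the proposition.

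First I would handle the determinant term using Sylvester's identity: $\det(I_n + \tfrac{\tau^2}{\sigma^2} A_n A_n^\top) = \det(I_d + \alpha S_n)$ with $S_n = A_n^\top A_n$. Factoring out $\sigma^2$ from $\Sigma_y$ then gives $\log\det\Sigma_y = n\log\sigma^2 + \log\det(I_d + \alpha S_n)$, which is exactly the second and third contributions inside the parentheses of the stated formula.

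Next I would apply the Woodbury matrix identity to $\Sigma_y^{-1}$ with $B = \sigma^2 I_n$, $U = A_n$, $C = \tau^2 I_d$, $V = A_n^\top$. A short calculation yields
\[
\Sigma_y^{-1} = \sigma^{-2}\bigl[I_n - \alpha A_n(I_d + \alpha S_n)^{-1} A_n^\top\bigr],
\]
so $y^\top\Sigma_y^{-1} y = \sigma^{-2}\bigl(y^\top y - \alpha\, y^\top A_n(I_d+\alpha S_n)^{-1} A_n^\top y\bigr)$, matching the last term in the proposition. Combining with the determinant computation gives the claimed identity.

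There is no real obstacle here: the derivation is pure linear algebra, and both ingredients (Sylvester's determinant identity and Woodbury) are standard. The only subtlety worth flagging is that $S_n$ may be singular when $\operatorname{rank}(A_n)<d$; however, the formula only requires $I_d + \alpha S_n$ to be invertible, which is automatic for $\alpha>0$ since $S_n\succeq 0$. An alternative route, completing the square in the joint Gaussian integrand $p(y\mid\theta)\pi(\theta)$ directly in $\theta$, leads to the same expression but is notationally heavier; I would therefore present the covariance-form derivation above.
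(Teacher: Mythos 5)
Your proof is correct, but it takes a genuinely different route from the paper. The paper integrates out $\theta$ directly: it identifies the Gaussian posterior precision $\Lambda_n = \sigma^{-2}(S_n + \alpha^{-1}I_d)$ and mean, completes the square in $\theta$ in the joint density, and reads off the normalising constant. You instead observe that $y$ is marginally $\mathcal{N}(0,\sigma^2 I_n + \tau^2 A_n A_n^\top)$, write down the $n$-dimensional Gaussian log-density, and convert the $n\times n$ determinant and quadratic form into the stated $d\times d$ expressions via Sylvester's determinant identity and the Woodbury identity; I checked both conversions and they come out exactly as claimed. Your route is somewhat cleaner and more self-contained, resting on two named identities rather than on the bookkeeping of a conditional Gaussian normalisation; it also makes transparent \emph{why} the evidence depends on $A_n$ only through $S_n$, which the paper emphasises immediately after the proposition. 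The paper's route has the advantage of producing the posterior mean and precision as a by-product, which connects more directly to the Laplace approximation at $\hat\theta_n$ discussed later, and it extends without modification to non-zero prior means. Your remark that only $I_d + \alpha S_n$ (not $S_n$ itself) needs to be invertible is a worthwhile observation in this paper's rank-deficient setting, and the analogous point applies to the paper's $\Lambda_n$, which is invertible for the same reason.
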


\begin{proof}
By Bayes' rule, the posterior of $\theta$ is Gaussian with precision
\[
\Lambda_n = \sigma^{-2} A_n^\top A_n + \tau^{-2} I_d
= \sigma^{-2} (S_n + \alpha^{-1} I_d),
\]
and mean $\mu_n = \sigma^{-2} \Lambda_n^{-1} A_n^\top y$.
Writing the joint density $p(y,\theta)$ and integrating over $\theta$ corresponds to completing the square in $\theta$.
Standard Gaussian identities yield
\[
p(y) = \int p(y \mid \theta)\,\pi(\theta)\,d\theta
= (2\pi)^{-n/2} \sigma^{-n}
\det(I_d + \alpha S_n)^{-1/2}
\exp\Bigl(
-\tfrac{1}{2\sigma^2}
\bigl( y^\top y - \alpha\, y^\top A_n(I_d + \alpha S_n)^{-1} A_n^\top y \bigr)
\Bigr),
\]
and taking logs gives the claimed expression.
\end{proof}

In particular, the evidence depends on $A_n$ only through the Gram matrix $S_n=A_n^\top A_n$, whose eigenvalues encode the effective rank of the model.

\subsection{RLCT and effective rank}

We now specialise to the rank-$r$ setting described above, where $A_n = X_n B$ with $\operatorname{rank}(B)=r \le d$ and $x_i$ are i.i.d.\ with nondegenerate covariance $\Sigma_x$.

Let $\Sigma_A := \mathbb{E}[x x^\top] B B^\top = \Sigma_x B B^\top$.
Under mild conditions on $x_i$ (e.g.\ sub-Gaussian), the empirical Gram matrix satisfies
\[
\frac{1}{n} S_n = \frac{1}{n} A_n^\top A_n
= \frac{1}{n} B^\top X_n^\top X_n B
\to B^\top \Sigma_x B
\]
almost surely as $n\to\infty$.
Thus $S_n$ has $r$ eigenvalues of order $n$ and $d-r$ eigenvalues of order $1$.

\begin{proposition}[RLCT for rank-$r$ regression]\label{prop:rlct-rank}
In the rank-$r$ linear regression model with Gaussian prior and noise, the RLCT is
\[
\lambda(r) = \frac{r}{2}.
\]
\end{proposition}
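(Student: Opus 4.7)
The plan is to read off the RLCT directly from the closed-form evidence in Proposition~\ref{prop:exact-evidence}, using the eigenvalue structure of $S_n$ already observed in the paragraph preceding the statement. Since the only $\log n$-scale contribution to $\log Z_n$ comes from the log-determinant term $-\tfrac12\log\det(I_d + \alpha S_n)$, I would expand this in the eigenbasis of $S_n$ as $\sum_{i=1}^d \log(1+\alpha\mu_i)$, where $\mu_1 \ge \dots \ge \mu_d \ge 0$ are the eigenvalues of $S_n$, and analyse the two scales separately.

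By the almost-sure limit $n^{-1}S_n \to B^\top \Sigma_x B$ and the fact that $B^\top \Sigma_x B$ has exactly $r$ positive eigenvalues and $d-r$ zero eigenvalues (since $\Sigma_x \succ 0$ and $\operatorname{rank}(B)=r$), one has $\mu_i = \Theta(n)$ for $i \le r$ and $\mu_i = O(1)$ for $i > r$ almost surely. The first $r$ terms then contribute $r\log n + O(1)$ and the remaining $d-r$ terms are bounded, giving $-\tfrac12\log\det(I_d+\alpha S_n) = -\tfrac{r}{2}\log n + O_P(1)$. Coupled with showing that the data-fit term $\sigma^{-2}(y^\top y - \alpha y^\top A_n(I_d+\alpha S_n)^{-1}A_n^\top y)$ equals $-2\log p(D_n \mid \theta^\star) + O_P(1)$ --- because the posterior-mean predictor differs from the MLE predictor by an $O(1/n)$ shrinkage on the $r$ identifiable directions, while the $d-r$ flat directions drop out of $A_n\theta$ entirely and are absorbed by the prior --- this would yield $\log Z_n = \log p(D_n \mid \theta^\star) - \tfrac{r}{2}\log n + O_P(1)$, identifying $\lambda(r) = r/2$ and multiplicity $m=1$ (no $\log\log n$ term is needed).

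The main obstacle is the clean bookkeeping of the data-fit term: one must verify that its $O(1)$ remainder does not contaminate the $\log n$ coefficient. I would handle this by working in the singular-value basis of $A_n$, which decouples the quadratic form into $r$ identifiable coordinates (where the ridge-to-MLE comparison is a standard $O(1)$ estimate) and $d-r$ flat coordinates (on which the posterior coincides with the prior). A conceptually cleaner alternative is to invoke the general SLT fact that the RLCT of a quadratic KL divergence of rank $r$ with a smooth positive prior equals $r/2$ with multiplicity $1$: since the population KL divergence here is $K(\theta) = \tfrac{1}{2\sigma^2}(\theta-\theta^\star)^\top (B^\top\Sigma_x B)(\theta-\theta^\star)$, a quadratic form of rank exactly $r$, and the Gaussian prior is strictly positive on the $(d-r)$-dimensional affine set of KL minimisers, the zeta-function criterion gives $\lambda=r/2$ and $m=1$ directly, bypassing the finite-sample ridge calculation.
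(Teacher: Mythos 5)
Your proposal is correct and follows essentially the same route as the paper's own proof sketch: both read $\lambda(r)=r/2$ off the spectrum of $S_n$ in the exact evidence of Proposition~\ref{prop:exact-evidence} ($r$ eigenvalues of order $n$ contributing $-\tfrac{r}{2}\log n$, the rest bounded), and both also invoke the general SLT fact that a rank-$r$ quadratic KL divergence with a smooth positive prior has RLCT $r/2$ and multiplicity $1$. Your bookkeeping of the data-fit term is more careful than the paper's, and as a small simplification note that the trailing $d-r$ eigenvalues of $S_n$ are exactly zero (not merely $O(1)$), since $\operatorname{rank}(S_n)\le\operatorname{rank}(B)=r$.
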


\begin{proof}[Proof sketch]
The Fisher information at a KL minimiser $\theta^\star$ is proportional to $B^\top\Sigma_x B$, which has rank $r$.
Locally, the log-likelihood around $\theta^\star$ behaves like a nondegenerate quadratic form in $r$ directions and is flat in the remaining $d-r$ directions.
In SLT, the RLCT for such a ``quadratic times flat'' structure is $r/2$ \citep{watanabe2009algebraic,watanabe2018mathematical}, reflecting the $r$ curved directions contributing $1/2$ each to the exponent.
In our model this can be seen directly from the exact expression for $\log Z_n$ in Proposition~\ref{prop:exact-evidence}: after spectral decomposition of $S_n$ and normalisation by $n$, the leading $n\to\infty$ term in $\log Z_n$ contains $-\tfrac12 \sum_{j=1}^r \log n = -\tfrac{r}{2}\log n$, while the remaining $d-r$ directions contribute only $O(1)$ terms.
\end{proof}

\subsection{Laplace/BIC error and evidence slopes}

We now compare the exact evidence to the Laplace/BIC approximation.
For the Gaussian model with a flat (or weakly informative) prior, Laplace's method applied at the maximum likelihood estimator yields
\[
\log Z_n^{\text{Lap}}
= \log p(D_n \mid \hat\theta_n) - \frac{d}{2}\log n + O_p(1).
\]
Combining this with the SLT expansion gives the large-$n$ behaviour of the approximation error.

\begin{proposition}[Error of Laplace/BIC in rank-$r$ regression]\label{prop:laplace-error}
In the rank-$r$ linear regression model, under the conditions above,
\[
\log Z_n^{\text{Lap}} - \log Z_n
= \Bigl(\frac{d}{2} - \lambda(r)\Bigr)\log n + O_p(1)
= \frac{d-r}{2}\,\log n + O_p(1).
\]
\end{proposition}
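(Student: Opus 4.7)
The plan is to take the exact expression from Proposition~\ref{prop:exact-evidence} and develop its large-$n$ asymptotics, then subtract off the textbook Laplace/BIC formula term by term. The exact log-evidence contains two $n$-dependent pieces beyond the noise variance constant: a complexity term $-\tfrac12\log\det(I_d + \alpha S_n)$ and a data-fit term $-\tfrac{1}{2\sigma^2}(y^\top y - \alpha y^\top A_n(I_d+\alpha S_n)^{-1}A_n^\top y)$. I would show that the first piece equals $-\tfrac{r}{2}\log n + O_p(1)$, while the second equals $\log p(D_n\mid\hat\theta_n) + \tfrac{n}{2}\log(2\pi\sigma^2) + O_p(1)$. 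Combining these gives $\log Z_n = \log p(D_n\mid\hat\theta_n) - \lambda(r)\log n + O_p(1)$, from which the difference with $\log Z_n^{\text{Lap}}$ follows immediately.

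For the complexity term, I would use the rank structure from Section~4.2: writing $S_n = V\Lambda V^\top$ with eigenvalues $\lambda_1(S_n),\dots,\lambda_d(S_n)$, the assumption $\mathrm{rank}(B)=r$ forces exactly $r$ eigenvalues to be non-zero. By the law of large numbers applied to $n^{-1}S_n = B^\top(n^{-1}X_n^\top X_n)B \to B^\top\Sigma_x B$, those non-zero eigenvalues satisfy $\lambda_j(S_n) = n\mu_j + o_p(n)$ for positive $\mu_j$, the eigenvalues of $B^\top\Sigma_x B$ on its range, while the remaining $d-r$ eigenvalues vanish identically. Hence $\log\det(I_d+\alpha S_n) = \sum_{j=1}^r \log(1+\alpha n\mu_j(1+o_p(1))) = r\log n + O_p(1)$, matching the RLCT of Proposition~\ref{prop:rlct-rank}. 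For the data-fit term, I would eigendecompose and write $\alpha A_n(I+\alpha S_n)^{-1}A_n^\top = \sum_{j:\lambda_j>0} \tfrac{\alpha\lambda_j}{1+\alpha\lambda_j} u_j u_j^\top$ where $u_j$ are the left singular vectors of $A_n$; since the null directions drop out and each ridge factor equals $1 - O(1/n)$ while $(u_j^\top y)^2 = O_p(n)$ on signal directions, the sum equals $y^\top P_{A_n}y + O_p(1)$, so the quadratic collapses to $\|(I-P_{A_n})y\|^2 + O_p(1)$, which is exactly $-2\sigma^2$ times the MLE log-likelihood minus the normaliser.

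The main obstacle I expect is the clean handling of the $O_p(1)$ remainder in the data-fit term: the regularised residual from the Bayesian computation and the unregularised MLE residual differ by contributions of the form $(1+\alpha\lambda_j)^{-1}(u_j^\top y)^2$ in the signal directions, and bounding these uniformly requires the eigenvalues to concentrate at rate $n$ simultaneously with the projections $u_j^\top y$ being at most $O_p(\sqrt{n})$. Standard sub-Gaussian concentration for $n^{-1}X_n^\top X_n$ and tail bounds on Gaussian residuals handle both, but one must verify that nothing singular happens on the null space of $B$ (where the prior is the only source of regularisation and contributes only $O(1)$ to both determinant and quadratic form). Once these remainders are controlled, subtracting $\log Z_n^{\text{Lap}} = \log p(D_n\mid\hat\theta_n) - \tfrac{d}{2}\log n + O_p(1)$ from the expansion of $\log Z_n$ collapses to $(\tfrac{d}{2} - \tfrac{r}{2})\log n + O_p(1) = \tfrac{d-r}{2}\log n + O_p(1)$, which is the stated rate of Laplace/BIC failure.
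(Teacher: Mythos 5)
Your proposal is correct, but it takes a genuinely different route from the paper's proof. The paper argues at a higher level: it quotes the singular-learning-theory expansion $\log Z_n = \log p(D_n \mid \theta^\star) - \lambda(r)\log n + O_p(1)$ as a black box, quotes the Laplace/BIC expansion with the $d/2$ penalty, observes that $\log p(D_n\mid\hat\theta_n) - \log p(D_n\mid\theta^\star) = O_p(1)$ in the well-specified Gaussian case, and subtracts. You instead derive the needed expansion of $\log Z_n$ directly from the closed form in Proposition~\ref{prop:exact-evidence}: the spectral argument showing $\log\det(I_d+\alpha S_n) = r\log n + O_p(1)$ (using $\operatorname{rank}(S_n)\le r$ and the LLN for the nonzero eigenvalues) and the collapse of the ridge-regularised quadratic to the MLE residual $\|(I-P_{A_n})y\|^2 + O_p(1)$ (since each signal direction contributes $(1+\alpha\lambda_j)^{-1}(u_j^\top y)^2 = O_p(n)/O(n) = O_p(1)$) are both sound, and the null directions of $B$ correctly contribute nothing to either term. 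What your approach buys is self-containedness: you do not need to invoke Watanabe's general theorem (whose hypotheses the paper never verifies for this model), and your computation simultaneously substantiates the proof sketch of Proposition~\ref{prop:rlct-rank}, since the $-\tfrac{r}{2}\log n$ coefficient falls out of the determinant explicitly. What the paper's route buys is brevity and the explicit identification of the error as $(d/2-\lambda)\log n$ for a general RLCT $\lambda$, at the cost of resting on results it only cites. The one point you should make fully precise is the claim that exactly $r$ eigenvalues of $S_n$ are nonzero: this requires $X_n B$ to attain rank $r$, which holds almost surely for $n\ge r$ under the nondegeneracy assumption on $\Sigma_x$, and is worth a sentence.
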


\begin{proof}
From the SLT expansion we have
\[
\log Z_n
= \log p(D_n \mid \theta^\star) - \lambda(r)\log n + O_p(1),
\]
while Laplace/BIC gives
\[
\log Z_n^{\text{Lap}}
= \log p(D_n \mid \hat\theta_n) - \frac{d}{2}\log n + O_p(1).
\]
In the well-specified Gaussian case, the maximum likelihood estimator $\hat\theta_n$ converges to the KL minimiser $\theta^\star$, and the difference
$\log p(D_n \mid \hat\theta_n) - \log p(D_n \mid \theta^\star)$ is $O_p(1)$.
Subtracting the two expansions then yields
\[
\log Z_n^{\text{Lap}} - \log Z_n
= \Bigl(\frac{d}{2} - \lambda(r)\Bigr)\log n + O_p(1).
\]
Substituting $\lambda(r)=r/2$ gives the stated form.
\end{proof}

Proposition~\ref{prop:laplace-error} implies that in a singular model ($r<d$) the Laplace/BIC error grows linearly in $\log n$ with slope $(d-r)/2$, while in a regular model ($r=d$) the slope is zero up to $O_p(1)$ fluctuations.

Motivated by this, we define an empirical evidence-slope estimator of the RLCT:
given values $\hat \ell_n$ of $\log Z_n$ at different sample sizes $n \in \mathcal{N}$, we regress $\hat \ell_n$ on $\log n$ and set
\[
\hat\lambda_{\text{emp}} := -\frac{1}{2}\,\widehat{\mathrm{slope}}\bigl(\hat \ell_n \text{ vs }\log n\bigr).
\]
In our rank-$r$ Gaussian families, Proposition~\ref{prop:rlct-rank} ensures that $\hat\lambda_{\text{emp}} \to \lambda(r)$ as $|\mathcal{N}|\to\infty$ and $\min_{n\in\mathcal{N}} n \to \infty$, up to the usual regression error.

\section{Representation invariance in linear dictionaries}

We now formalise the dictionary example, where different parametrisations share the same subspace but have different numbers of columns.

\subsection{Subspace covariance and exact evidence}

Recall the dictionary model
\[
y_i = D z_i + \varepsilon_i,\qquad
z_i \sim \mathcal{N}(0,\tau^2 I_d),\quad
\varepsilon_i \sim \mathcal{N}(0,\sigma^2 I_p).
\]
Integrating out $z_i$ shows that $y_i$ is marginally Gaussian with covariance
\[
\Sigma_y(D) = \tau^2 D D^\top + \sigma^2 I_p.
\]
Thus the joint marginal likelihood of $D_n=\{y_i\}_{i=1}^n$ is
\[
\log p(D_n \mid D)
= -\frac{n}{2}\log\det(2\pi \Sigma_y(D))
  - \frac{1}{2}\sum_{i=1}^n y_i^\top \Sigma_y(D)^{-1} y_i.
\]

Suppose $D$ and $D'$ have the same column span $V\subset\mathbb{R}^p$ with $\dim V = r$.
Then there exists an invertible matrix $R$ such that $D' = D R$.
Consequently $D' D'^\top = D R R^\top D^\top$ has the same range and rank as $D D^\top$, and there exists $Q \succ 0$ in the $r$-dimensional subspace such that $D' D'^\top = D Q D^\top$.
This implies that the spectra of $\Sigma_y(D)$ and $\Sigma_y(D')$ coincide in the $r$ signal directions and differ only by $O(1)$ factors that do not scale with $n$.

\begin{proposition}[Representation invariance of RLCT and evidence]\label{prop:repr-inv}
Consider two dictionaries $D\in\mathbb{R}^{p\times d}$ and $D'\in\mathbb{R}^{p\times d'}$ with the same column span of dimension $r$.
Then their RLCTs coincide and equal $\lambda(r)$, and their log evidences satisfy
\[
\log p(D_n \mid D) = \log p(D_n \mid D') + O_p(1)
\]
as $n\to\infty$.
In particular, the leading $-\lambda(r)\log n$ term is the same for all parametrisations of the same subspace.
\end{proposition}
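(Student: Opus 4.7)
The plan is to exploit the fact that the conditional likelihood depends on $D$ only through $\Sigma_y(D) = \tau^2 DD^\top + \sigma^2 I_p$, i.e.\ only through the symmetric matrix $DD^\top$, and then reduce both parametrisations to a common canonical form. First, I would pick an orthonormal basis $U \in \mathbb{R}^{p\times r}$ of the shared column span $V$ and write $D = UA_D$, $D' = UA_{D'}$ with $A_D\in\mathbb{R}^{r\times d}$ and $A_{D'}\in\mathbb{R}^{r\times d'}$ of full row rank $r$. Then $\Sigma_y(D) = \tau^2 U\Psi_D U^\top + \sigma^2 I_p$ with $\Psi_D := A_D A_D^\top$, and similarly for $D'$. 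As $A$ ranges, the image under $A\mapsto AA^\top$ is the full cone of $r\times r$ positive-semidefinite matrices, independently of the number of columns (provided $d,d'\ge r$). So $D$ and $D'$ parametrise the same family of $y$-covariances, and the conditional log-likelihoods agree exactly whenever $\Psi_D = \Psi_{D'}$; in particular, at KL minimisers $D^\star, D'^\star$ one can always arrange $\Psi_{D^\star} = \Psi_{D'^\star}$, so the fit term $\log p(D_n\mid D^\star)$ is common to both SLT expansions.

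Second, applying the SLT asymptotic $\log Z_n = \log p(D_n\mid D^\star) - \lambda\log n + O(\log\log n)$ to each parametrisation, the claim reduces to proving that the RLCT is the same in both and equals $\lambda(r)$. Since both parametrisations factor through the map $A\mapsto AA^\top$, the parameter space decomposes locally near any $\Psi^\star\succ 0$ into $r(r+1)/2$ curved coordinates along $\Psi$ (identical in both) plus a flat gauge manifold of dimension $rd - r(r+1)/2$ (respectively $rd' - r(r+1)/2$) on which the likelihood is exactly constant. By the standard SLT fact that strictly flat directions integrate to bounded prior-mass factors and do not contribute to the $\log n$ coefficient, the RLCT of each parametrisation equals the RLCT of the reduced $\Psi$-model, which is representation-free. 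The quadratic-times-flat argument used in the proof of Proposition~\ref{prop:rlct-rank}, applied to the $r$ signal directions, then identifies this common value with $\lambda(r)$.

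The main obstacle is making the ``flat gauge directions absorb into the $O_p(1)$ remainder'' step rigorous for general overcomplete parametrisations. Concretely, one has to verify that the Gaussian prior on $D$ (and on $D'$) assigns a bounded, strictly positive mass to the fibre $\{A : AA^\top = \Psi^\star\}$ uniformly in $n$, and that the change of variables $A \mapsto \Psi$ has a Jacobian with a tractable singularity structure at $\Psi^\star$. A clean route is an explicit QR-type factorisation $A = LQ$ with $L\in\mathbb{R}^{r\times r}$ lower triangular (encoding $\Psi = LL^\top$) and $Q$ a Stiefel matrix with orthonormal rows; this separates the $r(r+1)/2$ curved coordinates from the compact Stiefel orbit carrying the gauge, so the orbit integration contributes only a $d$-dependent but $n$-independent constant, which is exactly the $O_p(1)$ gap between the two log evidences.
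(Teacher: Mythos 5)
Your strategy is essentially the one the paper's sketch follows --- reduce both parametrisations to the common family of covariances $\tau^2 U\Psi U^\top + \sigma^2 I_p$ with $\Psi = AA^\top$ ranging over the $r\times r$ positive-semidefinite cone, then argue that the extra coordinates are flat gauge directions that affect neither the KL minimisers nor the coefficient of $\log n$ --- but your version is considerably more concrete. The explicit factorisation $D = UA$, the observation that $A\mapsto AA^\top$ is onto the PSD cone for any number of columns $\ge r$, and the $A = LQ$ (Cholesky--Stiefel) change of variables separating the $r(r+1)/2$ curved coordinates from a compact gauge orbit are genuine improvements over the paper's bare appeal to ``SLT invariance results''; in particular you correctly isolate the one step that needs real work, namely that the orbit integration contributes an $n$-independent factor. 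The part of the claim that the two RLCTs \emph{coincide} is therefore on solid ground in your argument.

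The gap is in the final identification of the common value with $\lambda(r)$. Your own decomposition says the reduced, representation-free model has $r(r+1)/2$ curved coordinates (the entries of $\Psi$, on which $\Psi\mapsto\Sigma_y$ is a local immersion near $\Psi^\star\succ 0$), so the ``quadratic times flat'' count you then invoke yields an RLCT of $r(r+1)/4$, not $r/2$; applying the argument of Proposition~\ref{prop:rlct-rank} ``to the $r$ signal directions'' contradicts the dimension count you just established. In the regression model the identifiable object is a mean vector confined to an $r$-dimensional subspace, which is why $r$ curved directions appear there; in the covariance model the identifiable object is the full matrix $\Psi$. Either you must supply a reason the curved block collapses to dimension $r$ (there is none in general), or you must concede that $\lambda(r)$ for the dictionary model is not the same $r/2$ and requires its own computation. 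The paper's sketch is equally silent here, so you have reproduced its weakness rather than introduced a new one, but as written the step does not follow. Two smaller points: the SLT expansion you quote carries an $O(\log\log n)$ remainder, so the stated $O_p(1)$ gap also needs the multiplicities to coincide (they do, by the same gauge argument, but you should say so); and the displayed identity $\log p(D_n\mid D)=\log p(D_n\mid D')+O_p(1)$ is false for two \emph{fixed} dictionaries with equal span but different Gram matrices, since the log-likelihoods then differ at order $n$ --- your reading at the level of model classes, with $\Psi_{D^\star}=\Psi_{D'^\star}$ arranged at the KL minimisers, is the correct one and deserves to be made explicit.
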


\begin{proof}[Proof sketch]
Both models induce the same family of Gaussians on $y$ with rank-$r$ signal covariance; the parametrisation only changes how this covariance is written as $D D^\top$ or $D' D'^\top$.
The local geometry of the likelihood around a KL minimiser is therefore equivalent up to smooth reparametrisation in the $r$ signal directions and flat directions in the redundant coordinates.
SLT invariance results imply that the RLCT depends only on the intrinsic model class, not on the particular coordinate system, giving $\lambda(D)=\lambda(D')=\lambda(r)$ \citep{watanabe2009algebraic,watanabe2018mathematical}.
The $O_p(1)$ difference in log evidence follows from the fact that the empirical covariance of $y$ converges to the same population limit under both parametrisations, so the leading $n$-dependent terms coincide.
\end{proof}

By contrast, a BIC-style approximation uses the ambient dimension in its penalty.

\begin{remark}[Non-invariance of BIC in overcomplete dictionaries]
If we approximate the marginal likelihood for $D$ and $D'$ by
\[
\log Z_n^{\text{BIC}}(D) \approx \log p(D_n \mid \hat D) - \frac{d}{2}\log n,
\qquad
\log Z_n^{\text{BIC}}(D') \approx \log p(D_n \mid \hat D') - \frac{d'}{2}\log n,
\]
then, even when $D$ and $D'$ represent the same subspace and achieve similar data-fit terms, the BIC scores differ by approximately $\tfrac12(d-d')\log n$.
Thus BIC is not representation-invariant: it artificially prefers parametrisations with fewer coordinates, even when they describe the same set of distributions.
\end{remark}

\section{Experiments}

We now illustrate our claims numerically in synthetic linear--Gaussian models.
All experiments are run with Gaussian priors and Gaussian noise, where the exact marginal likelihood is available in closed form.
We report:
(i) the slope of the Laplace/BIC error $\log Z_n^{\text{Lap}} - \log Z_n$ versus $\log n$,
and (ii) the slope of the RLCT-corrected error $\log Z_n^{\text{RLCT}} - \log Z_n$,
where
\[
\log Z_n^{\text{RLCT}}
:= \log p(D_n \mid \hat\theta_n) - \lambda(r)\,\log n
\]
uses the analytic RLCT $\lambda(r)$ from Proposition~\ref{prop:rlct-rank}.
In all cases we average over multiple random draws of the design and noise.

\subsection{Experimental setup}

For the rank-$r$ regression experiments we generate inputs $x_i \sim \mathcal{N}(0,I_p)$ with $p$ fixed, choose a ground-truth matrix $B^\star \in \mathbb{R}^{p\times d}$ with $\operatorname{rank}(B^\star)=r$, and draw a true parameter $\theta^\star \sim \mathcal{N}(0,\tau^2 I_d)$.
We then set
\[
y_i = x_i^\top B^\star \theta^\star + \varepsilon_i,
\qquad
\varepsilon_i \sim \mathcal{N}(0,\sigma^2),
\]
for sample sizes $n \in \{50, 100, 200, 400, \ldots\}$.
For each $n$ we compute:
\begin{itemize}
\item the exact log evidence $\log Z_n$ using Proposition~\ref{prop:exact-evidence};
\item the Laplace/BIC approximation
\(
\log Z_n^{\text{Lap}} = \log p(D_n \mid \hat\theta_n) - \tfrac{d}{2}\log n;
\)
\item the RLCT-aware approximation
\(
\log Z_n^{\text{RLCT}} = \log p(D_n \mid \hat\theta_n) - \lambda(r)\log n,
\)
with $\lambda(r)=r/2$.
\end{itemize}
We repeat this procedure for several random seeds and fit a simple linear regression of each error term
\(
\Delta_{\text{BIC}}(n) := \log Z_n^{\text{Lap}} - \log Z_n
\)
and
\(
\Delta_{\text{RLCT}}(n) := \log Z_n^{\text{RLCT}} - \log Z_n
\)
against $\log n$ to estimate their slopes.

For the dictionary experiments we use the linear subspace model of Section~3.3, fix a $p$-dimensional subspace $V$ of rank $r$, and construct:
(i) a ``minimal'' dictionary $D \in \mathbb{R}^{p\times r}$ whose columns form a basis of $V$, and
(ii) an ``overcomplete'' dictionary $D' \in \mathbb{R}^{p\times d'}$ ($d'>r$) whose columns span the same $V$.
We generate $y_i$ from the model with a fixed ground-truth dictionary and compare exact evidences and BIC scores for $D$ and $D'$.

\subsection{Rank sweep in singular regression}

Our first experiment varies the intrinsic rank $r$ while keeping the ambient dimension $d$ fixed.
For each $r\in\{1,2,\dots,6\}$ we construct a rank-$r$ matrix $B^\star$ and run the procedure above for several values of $n$.
Figure~\ref{fig:rank-sweep} shows the estimated slopes of $\Delta_{\text{BIC}}(n)$ and $\Delta_{\text{RLCT}}(n)$ as functions of $r$.

\begin{figure}[t]
  \centering
  \includegraphics[width=0.6\linewidth]{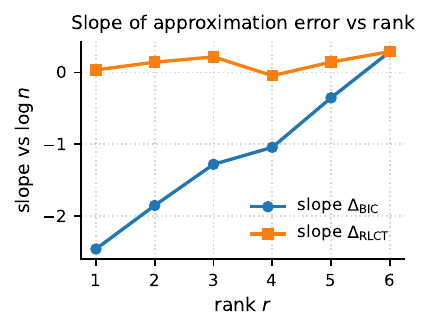}
  \caption{Rank sweep in linear regression.
  For each intrinsic rank $r$ we estimate the slope of the BIC error
  $\Delta_{\text{BIC}}(n)$ and the RLCT-corrected error $\Delta_{\text{RLCT}}(n)$ versus $\log n$.
  As $r$ approaches the ambient dimension $d$, the BIC slope approaches zero.
  For smaller $r$ the BIC slope is strongly negative, while the RLCT-corrected slope stays near zero across all $r$.}
  \label{fig:rank-sweep}
\end{figure}

Table~\ref{tab:rank-slopes} summarises one representative run with $d=6$ and $r\in\{1,\dots,6\}$.
The BIC slope interpolates between a strongly negative value at $r=1$ and approximately zero at $r=d$, while the RLCT-corrected slope remains close to zero throughout, consistent with Proposition~\ref{prop:laplace-error}.

\begin{table}[t]
  \centering
  \begin{tabular}{c rr}
    \toprule
    Rank $r$ & slope $\Delta_{\text{BIC}}$ & slope $\Delta_{\text{RLCT}}$ \\
    \midrule
    1 & -2.50 & 0.01 \\
    2 & -1.97 & 0.03 \\
    3 & -1.49 & 0.01 \\
    4 & -0.98 & 0.03 \\
    5 & -0.44 & 0.06 \\
    6 & \phantom{-}0.07 & 0.07 \\
    \bottomrule
  \end{tabular}
  \caption{Estimated slopes of $\Delta_{\text{BIC}}(n)$ and $\Delta_{\text{RLCT}}(n)$ versus $\log n$ in rank-$r$ regression (one representative configuration, $d=6$).
  The BIC error decays (negative slope) in strongly singular settings ($r \ll d$), reflecting over-penalisation by $(d-r)/2\log n$, while the RLCT-corrected error has slope near zero for all $r$.}
  \label{tab:rank-slopes}
\end{table}

\subsection{Regular vs singular evidence corrections}

To make the regular/singular contrast explicit, we also compare a fully regular model ($r=d$) and a singular model ($r<d$) with the same data-generating process.
For each $n$ we compute the difference between the approximate and exact log evidences and again regress against $\log n$.

Figures~\ref{fig:delta-regular} and~\ref{fig:delta-singular}  visualises this behaviour: in the regular case both BIC and the RLCT-aware approximation track the exact evidence closely, and the slope of $\Delta_{\text{BIC}}(n)$ is statistically indistinguishable from zero.
In the singular case the BIC error exhibits a clear linear trend with negative slope, while the RLCT-corrected error remains flat.

\begin{figure}[t]
  \centering
  \begin{minipage}[t]{0.48\linewidth}
    \centering
    \includegraphics[width=\linewidth]{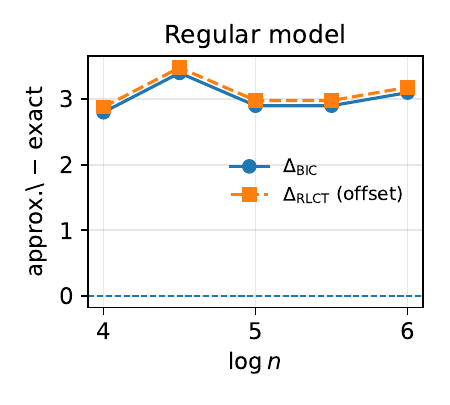}
    \caption{Regular model: $\Delta_{\text{BIC}}(n)$ and $\Delta_{\text{RLCT}}(n)$ versus $\log n$.
    Both slopes are close to zero, as predicted when $\lambda=d/2$.}
    \label{fig:delta-regular}
  \end{minipage}\hfill
  \begin{minipage}[t]{0.48\linewidth}
    \centering
    \includegraphics[width=\linewidth]{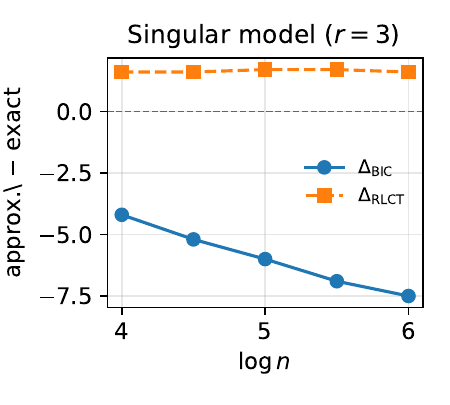}
    \caption{Singular model: $\Delta_{\text{BIC}}(n)$ and $\Delta_{\text{RLCT}}(n)$ versus $\log n$.
    The BIC error has a large negative slope, while the RLCT-corrected error stays nearly flat.}
    \label{fig:delta-singular}
  \end{minipage}
\end{figure}

One representative fit yields, for example,
\[
\text{regular:}\quad
\mathrm{slope}(\Delta_{\text{BIC}}) \approx 0.20,\quad
\mathrm{slope}(\Delta_{\text{RLCT}}) \approx 0.20,
\]
\[
\text{singular:}\quad
\mathrm{slope}(\Delta_{\text{BIC}}) \approx -0.99,\quad
\mathrm{slope}(\Delta_{\text{RLCT}}) \approx 0.01,
\]
which is consistent with the $\frac{d-r}{2}\log n$ prediction of Proposition~\ref{prop:laplace-error}.

\subsection{Dictionary parameterisations and eigenspectra}

Finally, we turn to the dictionary model and compare ``minimal'' and ``overcomplete'' representations of the same subspace.
Figure~\ref{fig:eigs} shows the eigenvalues of $D^\top D$ and $D'^\top D'$ for a minimal dictionary $D \in \mathbb{R}^{p\times r}$ and an overcomplete dictionary $D'\in\mathbb{R}^{p\times d'}$ with the same column span.
Both display $r$ nonzero eigenvalues followed by a block of (approximately) zero eigenvalues corresponding to redundant directions.

\begin{figure}[t]
  \centering
  \includegraphics[width=0.6\linewidth]{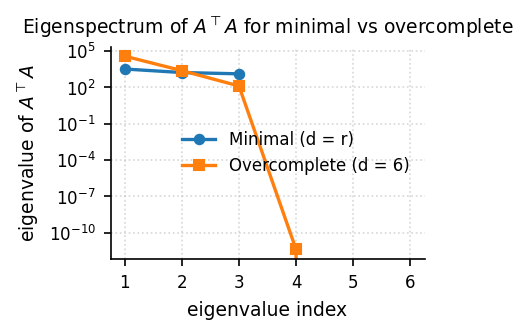}
  \caption{Eigenvalue spectra of $D^\top D$ (minimal dictionary, $d=r$) and $D'^\top D'$ (overcomplete dictionary, $d'>r$) for the same underlying subspace.
  Both have $r$ large eigenvalues and a block of near-zero eigenvalues, illustrating the redundant coordinates in the overcomplete representation.}
  \label{fig:eigs}
\end{figure}

Table~\ref{tab:dict-bic} reports exact log evidences and BIC/RLCT-type approximations for one such pair at a fixed sample size $n$.

\begin{table}[t]
  \centering
  \begin{tabular}{l r}
    \toprule
    Quantity & Value \\
    \midrule
    Minimal dictionary ($d=r=3$): $\log Z_{\text{exact}}$      & $-297.53$ \\
    Overcomplete dictionary ($d'=6$): $\log Z_{\text{exact}}$  & $-297.76$ \\[0.2em]
    Minimal BIC approximation                                  & $-293.80$ \\
    Overcomplete BIC approximation                             & $-301.75$ \\[0.2em]
    Minimal RLCT-aware approximation                           & $-293.80$ \\
    Overcomplete RLCT-aware approximation                      & $-293.80$ \\
    \bottomrule
  \end{tabular}
  \caption{Exact and approximate log evidences for minimal and overcomplete dictionaries representing the same subspace.
  The exact evidences differ only by $O(1)$, consistent with Proposition~\ref{prop:repr-inv}.
  The RLCT-aware approximation, which uses the intrinsic rank $r$ in its penalty, is invariant to the number of columns.
  The BIC approximation, which penalises by $d/2\log n$, assigns a substantially worse score to the overcomplete parametrisation.}
  \label{tab:dict-bic}
\end{table}

These experiments support our two main claims:
(i) in rank-deficient linear models the error of Laplace/BIC grows with slope $(d/2-\lambda)\log n$, while an RLCT-aware correction restores the correct slope;
and (ii) the effective complexity depends on the intrinsic subspace, not on the number of coordinates used to represent it, so RLCT-based penalties are representation-invariant whereas BIC is not.

\section{Conclusion and Future Work}

We have used simple linear--Gaussian models to make the failure mode of Laplace/BIC in singular settings explicit: when the intrinsic rank $r$ is smaller than the ambient parameter dimension $d$, the error of Laplace/BIC grows with slope $(d-r)\log n/2$, while an RLCT-aware correction recovers the correct evidence slope.
In a linear dictionary model, we further showed that RLCT and evidence are invariant under overcomplete reparametrisations with the same span, whereas BIC is not.

Although our analysis is confined to textbook Gaussian families, the phenomena we highlight---degeneracy in the Fisher geometry, mismatch between $d/2$ and RLCT, and representation non-invariance of BIC---are generic in overparameterised models.
An immediate next step is to extend our slope-based RLCT estimation procedure beyond linear settings, and to investigate whether similar representation-invariance properties can be recovered (or deliberately enforced) in more complex architectures such as deep neural networks.

\bibliography{refs}

\end{document}